\newtheorem{prop}{Proposition}
\icmltitlerunning{Transductive Adversarial Networks (TAN)}
\begin{document}

\twocolumn[
\icmltitle{Transductive Adversarial Networks (TAN)}



\icmlsetsymbol{equal}{*}

\begin{icmlauthorlist}
\icmlauthor{Sean Rowan}{ucl}
\end{icmlauthorlist}

\icmlaffiliation{ucl}{University College London}

\icmlcorrespondingauthor{}{sean.rowan.16@ucl.ac.uk}

\icmlkeywords{Machine Learning, domain-adaptation, Transfer Learning, Generative Adversarial Networks, Semi-supervised}

\vskip 0.3in
]



\printAffiliationsAndNotice{}  

\begin{abstract}
Transductive Adversarial Networks (TAN) is a novel domain-adaptation machine learning framework that is designed for learning a conditional probability distribution on unlabelled input data in a target domain, while also only having access to: (1) easily obtained labelled data from a related source domain, which may have a different conditional probability distribution than the target domain, and (2) a marginalised prior distribution on the labels for the target domain. TAN leverages a fully adversarial training procedure and a unique generator/encoder architecture which approximates the transductive combination of the available source- and target-domain data. A benefit of TAN is that it allows the distance between the source- and target-domain label-vector marginal probability distributions to be greater than 0 (i.e. different tasks across the source and target domains) whereas other domain-adaptation algorithms require this distance to equal 0 (i.e. a single task across the source and target domains). TAN can, however, still handle the latter case and is a more generalised approach to this case. Another benefit of TAN is that due to being a fully adversarial algorithm, it has the potential to accurately approximate highly complex distributions. Theoretical analysis demonstrates the viability of the TAN framework.
\end{abstract}

\section{Introduction}

The scenario of having access to a small amount of labeled data but a large amount of unlabelled data is a common one in practice. In an idealised learning situation, the conditional probability distribution between the input vector and the label vector across the sets of labeled and unlabelled data are equal. However, typically this does not occur in practice. Instead, the small amount of labeled data that is accessible is usually either significantly simpler than the encountered unlabelled data, or comes from a different domain with a different conditional probability distribution between its input vector and label vector. These two practical cases can be considered the same from a learning point-of-view as the latter practical case \cite{pan2010survey}. 

In the standard domain-adaptation learning scenario, it is expected that the labelled and unlabelled input vectors can be drawn from unique marginal probability distributions. However, it is required that the label-vector marginal probability distribution for the labelled and unlabelled sets of data are equal and match the available labelled set of data \cite{pan2010survey}. This is demonstrated in the following example involving the MNIST (hand-drawn digits) and SVHN (house numbers from Google StreetView images) datasets. 

\begin{figure}[ht]
\vskip 0.2in
\begin{center}
\centerline{\includegraphics[width=0.6\columnwidth]{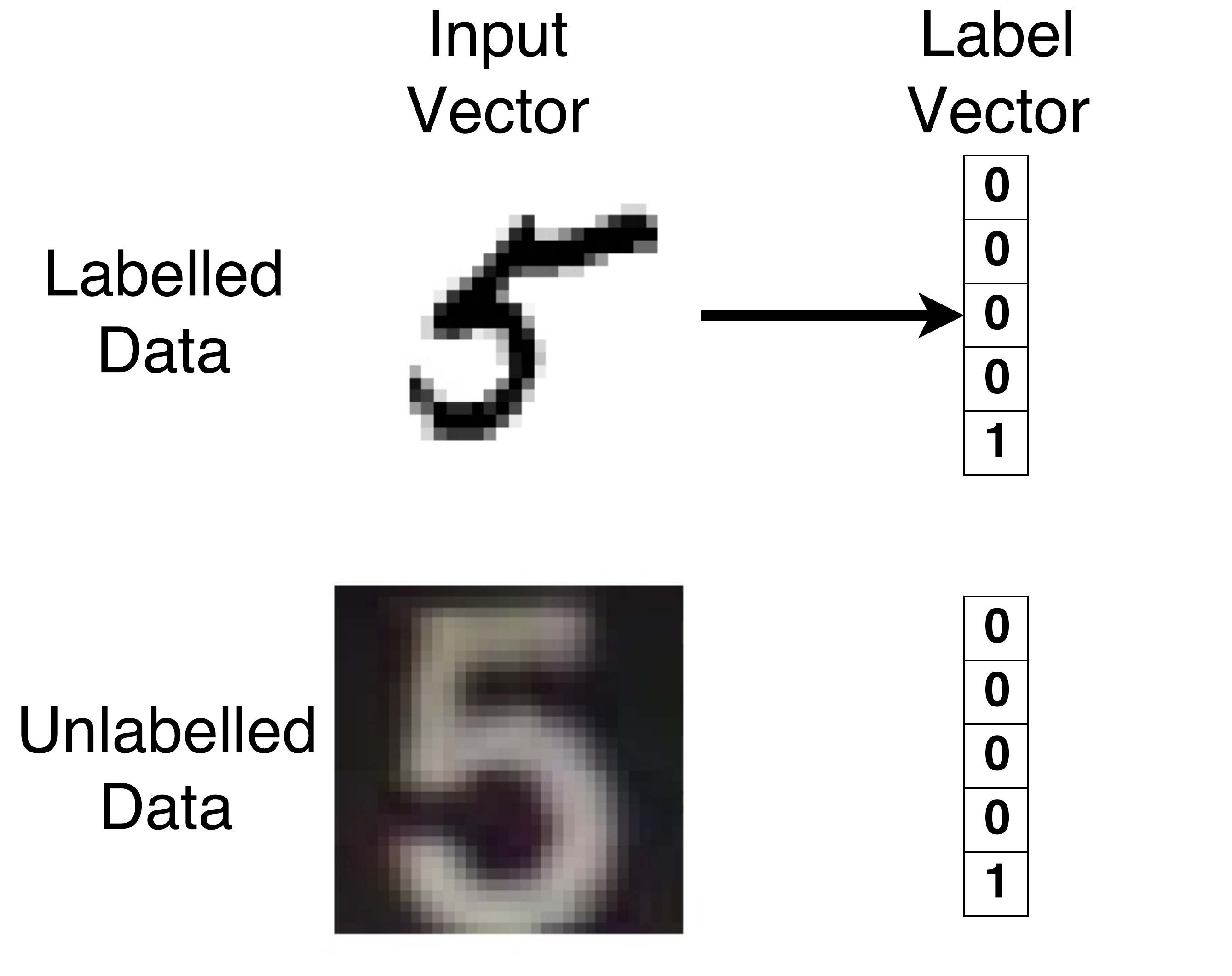}}
\caption{The standard domain-adaptation learning scenario where the label-vector marginal probability distributions across domains are expected to be equal. In this example learning scenario, the labelled data are pairs of both an image of a hand-drawn number from 1 through 5 and a 5-dimensional one-hot encoded vector that encodes the numerical representation of the input image. The unlabelled data are images of house numbers from 1 through 5. There are common features in the input vectors across domains that allow a domain-adaptation learning algorithm to assign labels to the unlabelled input vectors using the available labelled and unlabelled data.}
\label{icml-historical}
\end{center}
\vskip -0.2in
\end{figure}
Now consider a generalised domain-adaptation learning scenario where both the input-vector and the label-vector marginal probability distributions across domains are not expected to be equal. This generalised scenario motivates the design of TAN. The scenario is demonstrated in the following example involving the MNIST and SVHN datasets.

\begin{figure}[ht]
\vskip 0.2in
\begin{center}
\centerline{\includegraphics[width=0.6\columnwidth]{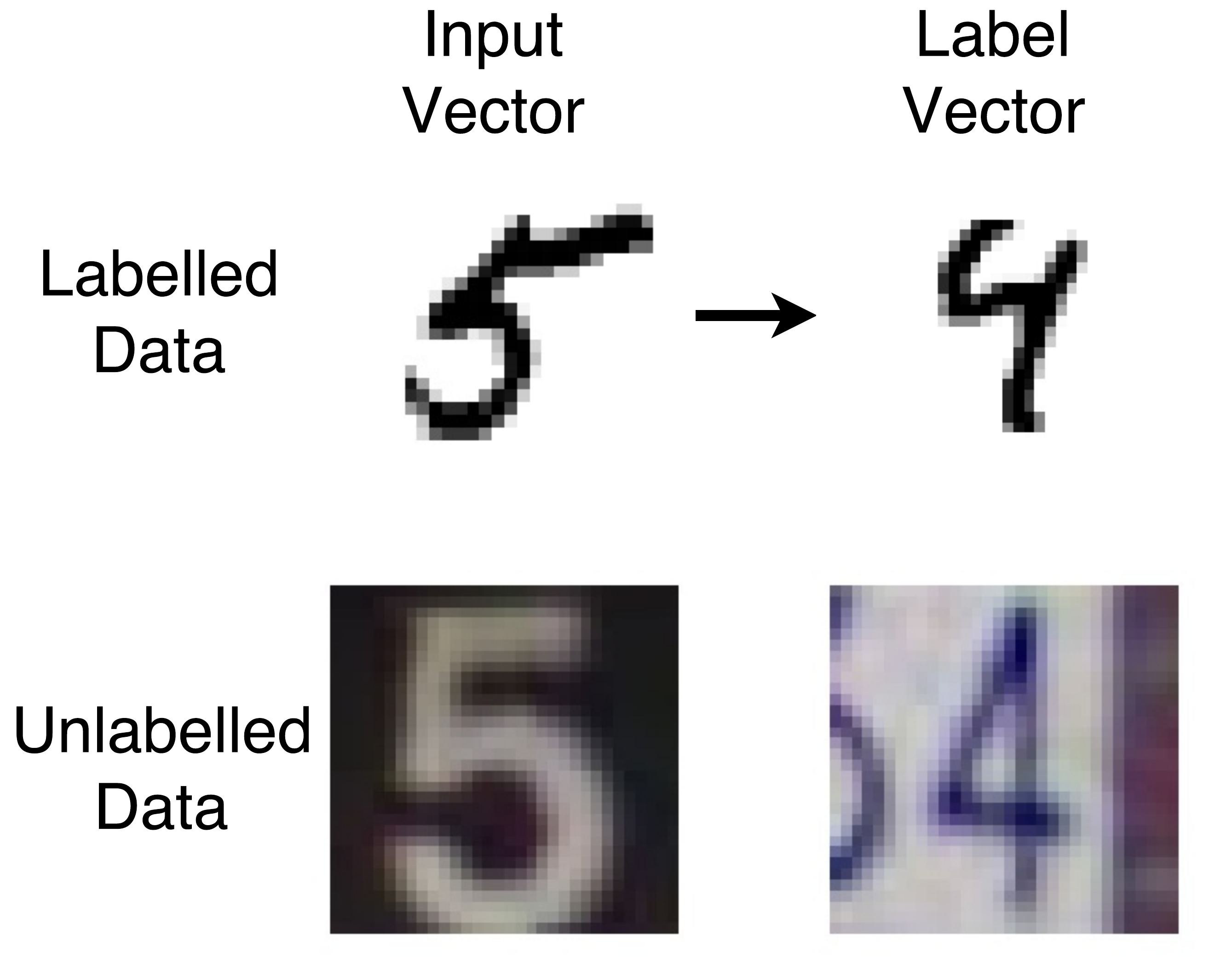}}
\caption{A generalised domain-adaptation learning scenario where both the input-vector and the label-vector marginal probability distributions across domains are not expected to be equal. In this example learning scenario, the labelled data are pairs of a hand-drawn single-digit image of an odd number and a hand-drawn single-digit image of the previous even number. The unlabelled data pairs are the same except they are images of house numbers.}
\label{icml-historical}
\end{center}
\vskip -0.2in
\end{figure}

This generalised domain-adaptation learning scenario is distinct from the style-transfer learning problem. In style transfer, learning occurs only on a single marginalised input vector across domains, and does not involve a corresponding conditional label vector, thus significantly reducing the scope of applications \cite{jing2017neural}.

We now further motivate the usefulness of an algorithm that can learn a conditional probability distribution within the generalised domain-adaptation learning scenario with a real-world application. Consider the problem of human drug discovery. In a human drug discovery scenario, there is no data available about how an experimental drug molecule might bind to known human protein structures due to the difficulty in testing new drugs on live human subjects. However, there is ample data available on how an experimental drug molecule can bind to known yeast cell protein structures due to the free ability to test new drugs on these cells. In this scenario, the yeast cell experiments represent the source domain and the human experiments represent the target domain. The yeast cell protein structure is the input vector of the source domain and the experimental drug for yeast cells is the label vector of the source domain. Likewise, the human protein structure is the input vector of the target domain and the experimental drug for humans is the unknown label vector of the target domain. The learning goal is to generate a shortlist of potential candidate drugs for further human testing. Such an algorithm would be highly valuable in discovering new drugs that are suitable for humans with fewer human drug trials.



\section{Related Work}

A good overview of transfer learning research and terminology can be found at: \cite{pan2010survey}, we follow this terminology.

There are two prior works that form the basis for the TAN framework: 1) Generative Adversarial Networks (GAN) \cite{goodfellow2014generative} and 2) Adversarially Learned Inference (ALI) \cite{dumoulin2016adversarially} (which is equivalent to BiGAN \cite{donahue2016adversarial}). TAN leverages the general theoretical results from the GAN framework (the ALI framework leverages the GAN results as well) but utilises the ALI framework's training procedure as a component of the unique TAN algorithm.

In the GAN framework, a two-player zero-sum game between adversarial learning agents is played where one agent (the generator) learns to generate convincing fake data, while the other agent (the discriminator) learns to discern generated data from sampled data which comes from an unknown distribution. The generator learns a transfer function that converts an inputted Gaussian-noise vector into convincing data that matches the unknown data distribution on convergence of the adversarial game.

The ALI framework (and also the BiGAN framework) extends the GAN framework by simultaneously learning a reverse transfer function that maps inputted data back to the Gaussian-noise vector which generated it, allowing the ability to finely control the features of the generated data with interpolations in the Gaussian-noise vector. The ALI framework by itself does not allow the ability to learn conditional probability distributions on inputted conditional data pairs \cite{arora2017theoretical}. 

The GAN framework can directly learn conditional probability distributions, and has also previously been formulated for the standard domain-adaptation learning scenario \cite{tzeng2017adversarial}. However, the GAN framework and its variants are not suited to the generalised domain-adaptation learning scenario because the discriminator requires label-vectors\footnote{`Label-vector' here means the actual data that the discriminator discerns as being real or fake, and not the real/fake label for the data inputted to the discriminator. Also, the input-vector for the domain-adaptation problem is inputted to the generator along with the Gaussian noise vector.} that come from the same marginalised probability distribution across the real and fake (i.e. source and target in this case) domains. TAN allows the label-vectors to come from different marginalised probability distributions across the real and fake domains.

$\Delta$-GAN \cite{gan2017triangle} is structurally similar to TAN in that it also leverages the GAN theoretical results and the ALI training procedure, however it is built for the more restrictive inductive transfer learning task and cannot handle the transductive transfer learning task. This means that $\Delta$-GAN requires paired input/label training data in both the source and target domain, whereas TAN only requires paired input/label training data in the source domain, unlabelled input data in the target domain and a marginalised prior distribution on the label-vector distribution in the target domain.

\section{TAN Framework}
In this section, we first outline the TAN probabilistic model, then we define the training procedure of this probabilistic model and finally we conclude with a proof that establishes the global optimality and convergence properties of the TAN framework. 

TAN leverages both a GAN network and an ALI network, but with a shared generator across the two networks. The GAN network is trained normally and exclusively on the source-domain data. The ALI network is also trained normally but exclusively on the target-domain unlabelled input data and a prior on the target-domain label data. A unique training procedure which combines the two networks via the generator forces the shared generator and the ALI network's encoder to accommodate the statistics of both the source and target domain, and leads to convergence at a unique global optimum under mild assumptions (the exact same convexity assumptions from the original GAN framework formulation \cite{goodfellow2014generative}). The trained encoder can then be used as an inference model on the target-domain unlabelled input-data.
\subsection{Model}
We first define our terms as follows. $\bm{x}$ is the input-vector and $\bm{z}$ is the label-vector. $p_s(\bm{x},\bm{z})$ is the joint data distribution from the source domain. $p_s(\bm{z})$ is the marginalised label-vector data distribution from the source domain. $p_t(\bm{x})$ is the unlabelled input-vector data distribution from the target domain. $\widetilde{p}_t(\bm{z})$ is the label-vector prior distribution from the target domain. $G_x(\bm{x}|\bm{z}; \theta_{gx})$ is the shared generator function. $G_z(\bm{z}|\bm{x}; \theta_{gz})$ is the encoder function. $y$ is the binary classification label of the inputted data to a discriminator. $y=1$ for samples from the distribution that the discriminator learns to support. $D_s(y|\bm{x},\bm{z}; \theta_{ds})$ is the source-domain discriminator function. $D_t(y|\bm{x},\bm{z}; \theta_{dt})$ is the target-domain discriminator function.

The source-domain value function is:
\begin{equation}\label{eq:source-value}
\begin{split}
\min_{G} \max_{D} V_s(G, D) = \mathbb{E}_{(\bm{x},\bm{z})\sim p_s(\bm{x},\bm{z})}[\log D_s(\bm{x}, \bm{z})]\\
+ \mathbb{E}_{\bm{z}\sim p_{s}(\bm{z})}[\log (1 - D_s(G_x(\bm{z}), \bm{z}) )].
\end{split}
\end{equation}
The target-domain value function is:
\begin{equation}\label{eq:target-value}
\begin{split}
\min_{G} \max_{D} V_t(G, D) = \mathbb{E}_{\bm{x}\sim p_{t}(\bm{x})}[\log D_t(\bm{x}, G_z(\bm{x}))]\\
+  \mathbb{E}_{\bm{z}\sim \widetilde{p}_{t}(\bm{z})}[\log (1 - D_t(G_x(\bm{z}), \bm{z}))].
\end{split}
\end{equation}
In practice, the value functions are reworked such that the generator maximises an inverted expression whose gradient is stronger when the discriminator's output saturates, as in the original GAN paper \cite{goodfellow2014generative}. Also in practice, the logarithmic functions are replaced with the Wasserstein distance metric which prevents saturation and provides better experimental performance \cite{arjovsky2017wasserstein}. We start with the above original expressions for the value functions in order to make the following TAN theoretical results a more straightforward extension of the original GAN results.

\subsection{Training Procedure}
The TAN training procedure is as follows. For $m$ steps, the source-domain value function (eq. \ref{eq:source-value}) is iteratively solved using stochastic gradient descent. Then for $n$ steps, the target-domain value function (eq. \ref{eq:target-value}) is iteratively solved using stochastic gradient descent. The two value functions share a common generator function, $G_x(\bm{x}|\bm{z}; \theta_{gx})$. This shared generator function learns to accommodate both the source and target domain data, which allows global optimality in the entire TAN framework, as shown in the next section.
\begin{algorithm}[h!]
   \caption{The TAN Training Procedure}
   \label{alg:example}
\begin{algorithmic}
   \STATE $\theta_{gx}$, $\theta_{gz}$, $\theta_{ds}$, $\theta_{dt}$ $\leftarrow$ initialise network parameters
   \REPEAT
	\FOR{$m$ steps} 
	\FOR{$k$ steps}
   		\STATE $(\bm{x},\bm{z})^{(1)}, \dots, (\bm{x},\bm{z})^{(M)} \sim p_s(\bm{x},\bm{z})$ 
		\STATE $\bm{z}^{(1)}, \dots, \bm{z}^{(M)} \sim p_s(\bm{z})$
		\STATE $\hat{\bm{x}}^{(j)} \sim G_x \left(\bm{z}^{(j)}\right), \ \ \ \ \ \ j = 1, \dots, M $ 
		\STATE $\rho_r^{(i)} \leftarrow D_s\left((\bm{x},\bm{z})^{(i)}\right), \ \ \ \ \ \ i = 1, \dots, M $ 
		\STATE $\rho_g^{(j)} \leftarrow D_s\left(\hat{\bm{x}}^{(j)}, \bm{z}^{(j)}\right), \ \ \ \ \ \ j = 1, \dots, M $
		\STATE $\mathcal{L}_d \leftarrow -\frac{1}{M} \left( \sum_{i=1}^{M} \log \left( \rho_r^{(i)} \right) + \sum_{j=1}^{M} \log \left( 1 - \rho_g^{(j)} \right) \right)$
		\STATE $\theta_{ds} \leftarrow \theta_{ds} - \nabla_{\theta_{ds}}\mathcal{L}_d$
	\ENDFOR
	\STATE $\bm{z}^{(1)}, \dots, \bm{z}^{(M)} \sim p_s(\bm{z})$
	\STATE $\hat{\bm{x}}^{(j)} \sim G_x \left(\bm{z}^{(j)}\right), \ \ \ \ \ \ j = 1, \dots, M $ 
	\STATE $\rho_g^{(j)} \leftarrow D_s\left(\hat{\bm{x}}^{(j)}, \bm{z}^{(j)}\right), \ \ \ \ \ \ j = 1, \dots, M $
	\STATE $\mathcal{L}_g \leftarrow \frac{1}{M} \sum_{j=1}^{M} \log \left(1 - \rho_g^{(j)} \right)$
	\STATE $\theta_{gx} \leftarrow \theta_{gx} - \nabla_{\theta_{gx}}\mathcal{L}_g$
	\ENDFOR
	\FOR{$n$ steps}
   	\STATE $\bm{x}^{(1)}, \dots, \bm{x}^{(M)} \sim p_t(\bm{x})$ 
	\STATE $\bm{z}^{(1)}, \dots, \bm{z}^{(M)} \sim \widetilde{p}_t(\bm{z})$
	\STATE $\hat{\bm{z}}^{(i)} \sim G_z \left( \bm{x}^{(i)}\right), \ \ \ \ \ \ i = 1, \dots, M $ 
	\STATE $\hat{\bm{x}}^{(j)} \sim G_x \left( \bm{z}^{(j)}\right), \ \ \ \ \ \ j = 1, \dots, M $
	\STATE $\rho_e^{(i)} \leftarrow D_t(\bm{x}^{(i)}, \hat{\bm{z}}^{(i)}), \ \ \ \ \ \ i = 1, \dots, M $ 
	\STATE $\rho_g^{(j)} \leftarrow D_t(\hat{\bm{x}}^{(j)}, \bm{z}^{(j)}), \ \ \ \ \ \ j = 1, \dots, M $
	\STATE $\mathcal{L}_d \leftarrow -\frac{1}{M} \left( \sum_{i=1}^{M} \log \left( \rho_e^{(i)} \right) + \sum_{j=1}^{M} \log \left( 1 - \rho_g^{(j)} \right) \right)$
	\STATE $\mathcal{L}_g \leftarrow -\frac{1}{M} \left( \sum_{i=1}^{M} \log \left( 1 - \rho_e^{(i)} \right) + \sum_{j=1}^{M} \log \left(\rho_g^{(j)} \right) \right)$
	\STATE $\theta_{dt} \leftarrow \theta_{dt} - \nabla_{\theta_{dt}}\mathcal{L}_d$
	\STATE $\theta_{gx} \leftarrow \theta_{gx} - \nabla_{\theta_{gx}}\mathcal{L}_g$
	\STATE $\theta_{gz} \leftarrow \theta_{gz} - \nabla_{\theta_{gz}}\mathcal{L}_g$
	\ENDFOR
   \UNTIL{convergence}
\end{algorithmic}
\end{algorithm}

\subsection{Global Optimality and Convergence Proof}

\begin{prop}

The global optimum across $\min_{G} \max_{D} V_s(G, D)$ and $\min_{G} \max_{D} V_t(G, D)$ is achieved at: $G_z(\bm{z}|\bm{x}; \theta_{gz}^*) = \frac{p_s(\bm{x},\bm{z})\widetilde{p}_t(\bm{z})}{p_s(\bm{z})p_t(\bm{x})}$.
\end{prop}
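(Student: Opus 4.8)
The plan is to reduce each of the two minimax problems to the standard GAN fixed-point argument and then couple them through the shared generator $G_x$. First I would treat the source-domain value function $V_s(G,D)$, which is exactly a conditional GAN: the discriminator $D_s$ separates real pairs drawn from $p_s(\bm{x},\bm{z})$ from fake pairs $(G_x(\bm{z}),\bm{z})$ whose joint density is $q_s(\bm{x},\bm{z}) = p_s(\bm{z})\,p_{G_x}(\bm{x}\mid\bm{z})$. Applying the optimal-discriminator computation from the GAN framework, for fixed $G$ the pointwise maximizer is $D_s^*(\bm{x},\bm{z}) = p_s(\bm{x},\bm{z})/(p_s(\bm{x},\bm{z}) + q_s(\bm{x},\bm{z}))$; substituting this back reduces the inner problem to a Jensen--Shannon divergence between $p_s(\bm{x},\bm{z})$ and $q_s(\bm{x},\bm{z})$, minimized precisely when $q_s = p_s(\bm{x},\bm{z})$. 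Since both joints share the $\bm{z}$-marginal $p_s(\bm{z})$, this forces $p_{G_x}(\bm{x}\mid\bm{z}) = p_s(\bm{x},\bm{z})/p_s(\bm{z}) = p_s(\bm{x}\mid\bm{z})$, pinning the shared generator to the source conditional.

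Next I would repeat the argument for the target-domain value function $V_t(G,D)$, which has the ALI/BiGAN structure: $D_t$ separates encoder pairs of joint density $q_{\mathrm{enc}}(\bm{x},\bm{z}) = p_t(\bm{x})\,p_{G_z}(\bm{z}\mid\bm{x})$ from generator pairs of joint density $q_{\mathrm{gen}}(\bm{x},\bm{z}) = \widetilde{p}_t(\bm{z})\,p_{G_x}(\bm{x}\mid\bm{z})$. The identical fixed-point computation gives the optimal $D_t^*$ and reduces the inner problem to a Jensen--Shannon divergence between $q_{\mathrm{enc}}$ and $q_{\mathrm{gen}}$, which is minimized exactly when $q_{\mathrm{enc}} = q_{\mathrm{gen}}$, i.e. $p_t(\bm{x})\,p_{G_z}(\bm{z}\mid\bm{x}) = \widetilde{p}_t(\bm{z})\,p_{G_x}(\bm{x}\mid\bm{z})$.

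The coupling step finishes the argument: because $G_x$ is shared across the two value functions, I would substitute the source-optimal identity $p_{G_x}(\bm{x}\mid\bm{z}) = p_s(\bm{x}\mid\bm{z})$ into the target optimality condition and solve for the encoder, obtaining $p_{G_z}(\bm{z}\mid\bm{x}) = \widetilde{p}_t(\bm{z})\,p_s(\bm{x}\mid\bm{z})/p_t(\bm{x}) = p_s(\bm{x},\bm{z})\widetilde{p}_t(\bm{z})/(p_s(\bm{z})\,p_t(\bm{x}))$, which is exactly the claimed $G_z(\bm{z}\mid\bm{x};\theta_{gz}^*)$. I would also invoke, as in the original GAN analysis, the assumption that $D$, $G_x$, and $G_z$ range over sufficiently rich (nonparametric) function classes so that the pointwise optimal discriminators and the required conditionals are realizable.

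The main obstacle is justifying that the two minimax games actually reach their individual global optima simultaneously at a single configuration, rather than competing through the shared $G_x$. The delicate point is that attaining the target optimum $q_{\mathrm{enc}} = q_{\mathrm{gen}}$ by choice of $G_z$ forces the $\bm{x}$-marginals to agree, i.e. it requires $\int p_{G_x}(\bm{x}\mid\bm{z})\,\widetilde{p}_t(\bm{z})\,d\bm{z} = p_t(\bm{x})$; once $G_x$ is fixed to the source conditional this becomes the compatibility condition $\int p_s(\bm{x}\mid\bm{z})\,\widetilde{p}_t(\bm{z})\,d\bm{z} = p_t(\bm{x})$. This is precisely the requirement that the derived expression for $G_z$ integrate to one over $\bm{z}$ and hence be a valid conditional density. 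I would therefore foreground this marginal-consistency condition relating the source conditional, the target label prior, and the target input marginal as the true hypothesis under which a joint global optimum exists, and note that the stated formula is well defined exactly when it holds.
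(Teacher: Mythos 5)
Your proposal is correct and follows essentially the same route as the paper's proof: apply the GAN optimal-discriminator/Jensen--Shannon argument to the source game to pin the shared generator to $p_s(\bm{x}\mid\bm{z})$, apply the ALI argument to the target game to obtain $p_t(\bm{x})\,G_z(\bm{z}\mid\bm{x}) = \widetilde{p}_t(\bm{z})\,G_x(\bm{x}\mid\bm{z})$, and couple the two conditions through the shared $G_x$ to solve for $G_z$. It is worth noting that the marginal-consistency condition you foreground, $\int p_s(\bm{x}\mid\bm{z})\,\widetilde{p}_t(\bm{z})\,d\bm{z} = p_t(\bm{x})$, is genuinely required for the two games to attain their global optima simultaneously (it is exactly the condition under which the claimed $G_z$ integrates to one and is a valid conditional density), whereas the paper's proof passes over this point and addresses simultaneous convergence only via a convexity-in-$G$ assumption, so your treatment is the more careful of the two.
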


\begin{proof}
By straightforward extension of the proof in \cite{goodfellow2014generative}, the following result is achieved on convergence of $\min_{G} \max_{D} V_s(G, D)$. 
\begin{equation}
p_s(\bm{x},\bm{z}) = G_x(\bm{x}|\bm{z};\theta_{gx}^*)p_s(\bm{z}).
\end{equation}
Similarly, by straightforward extension of the proof in \cite{dumoulin2016adversarially}, the following result is achieved on convergence of $\min_{G} \max_{D} V_t(G, D)$.
\begin{equation}
G_x(\bm{x}|\bm{z};\theta_{gx}^*)\widetilde{p}_t(\bm{z}) = G_z(\bm{z}|\bm{x}; \theta_{gz}^*)p_t(\bm{x}).
\end{equation}

The requirement on convergence from \cite{goodfellow2014generative} and \cite{dumoulin2016adversarially} for $\min_{G} \max_{D} V_s(G, D)$ and $\min_{G} \max_{D} V_t(G, D)$ is that $D$ is allowed to reach its optimum at each training step, given $G$. $D_s$ and $D_t$ are each allowed to reach their optimum at each training step of their respective value functions given their respective $G$. Therefore, $\min_{G} \max_{D} V_s(G, D)$ and $\min_{G} \max_{D} V_t(G, D)$ will simultaneously converge if $V_s(G, D)$ and $V_t(G, D)$ are convex in $G$.

Therefore, on simultaneous convergence of the above two value functions,
\begin{equation}
\begin{split}
G_x(\bm{x}|\bm{z};\theta_{gx}^*) &= \frac{G_z(\bm{z}|\bm{x}; \theta_{gz}^*)p_t(\bm{x})}{\widetilde{p}_t(\bm{z})}\\
&= \frac{p_s(\bm{x},\bm{z})}{p_s(\bm{z})}.
\end{split}
\end{equation}
Finally,
\begin{equation}
G_z(\bm{z}|\bm{x}; \theta_{gz}^*) = \frac{p_s(\bm{x},\bm{z})\widetilde{p}_t(\bm{z})}{p_s(\bm{z})p_t(\bm{x})}.
\end{equation}
\end{proof}

\section{Experiments}
We are currently performing extensive experiments and will release the details of these experiments in future versions of this paper.
\section{Conclusion}
We have established Transductive Adversarial Networks (TAN), which learns a conditional probability distribution on unlabelled input data in a target domain while also only having access to: (1) easily obtained labelled data from a related source domain, which may have a different conditional probability distribution than the target domain, and (2) a marginalised prior distribution on the labels for the target domain. Theoretical analysis has demonstrated the viability of the TAN framework, suggesting that the TAN framework could prove useful for further applications.

\section*{Acknowledgements}
Thank you to Sarah Murphy and Lewis Moffat for helpful discussions about the potential applications of TAN.

\bibliographystyle{abbrv}
\bibliography{TAN_2018}

\end{document}